\documentclass[journal]{IEEEtran}
\usepackage{amsmath,amsfonts}
\usepackage{algorithmic}
\usepackage{algorithm}
\usepackage{array}
\usepackage[caption=false,font=footnotesize,labelfont=rm,textfont=rm]{subfig}
\usepackage{textcomp}
\usepackage{stfloats}
\usepackage{url}
\usepackage{verbatim}
\usepackage{graphicx}
\usepackage{cite}

\usepackage{booktabs}
\usepackage{colortbl}
\usepackage{bm}
\usepackage[table]{xcolor}
\definecolor{LightBlue}{RGB}{229,243,255}
\usepackage{threeparttable}
\usepackage{multirow}
\usepackage{pifont}
\usepackage{amssymb}
\usepackage{dsfont}
\newtheorem{theorem}{Theorem}
\usepackage[colorlinks,
            linkcolor=red,
            anchorcolor=blue,
            citecolor=green
            ]{hyperref}
\begin{document}

\title{ProCal: Probability Calibration for Neighborhood-Guided Source-Free Domain Adaptation}

\author{Ying Zheng, Yiyi Zhang, Yi Wang,~\IEEEmembership{Member,~IEEE}, and Lap-Pui Chau,~\IEEEmembership{Fellow,~IEEE}
\thanks{The research work was conducted in the JC STEM Lab of Machine Learning and Computer Vision funded by The Hong Kong Jockey Club Charities Trust. This research received partially support from the Global STEM Professorship Scheme from the Hong Kong Special Administrative Region. This work was supported in part by the National Natural Science Foundation of China (No. 62106236).}
\thanks{Ying Zheng, Yi Wang and Lap-Pui Chau are with the Department of Electrical and Electronic Engineering, The Hong Kong Polytechnic University, Hong Kong, China. E-mail: \{ying1.zheng, yi-eie.wang, lap-pui.chau\}@polyu.edu.hk.}
\thanks{Yiyi Zhang is with the Department of Computer Science and Engineering, The Chinese University of Hong Kong, Hong Kong, China. E-mail: yyzhang24@cse.cuhk.edu.hk.}
\thanks{Corresponding author: Lap-Pui Chau.}}

\markboth{ProCal: Probability Calibration for Neighborhood-Guided Source-Free Domain Adaptation}%
{Shell \MakeLowercase{\textit{et al.}}: A Sample Article Using IEEEtran.cls for IEEE Journals}

\IEEEpubid{0000--0000/00\$00.00~\copyright~2026 IEEE}

\maketitle

\begin{abstract}
Source-Free Domain Adaptation (SFDA) adapts pre-trained models to unlabeled target domains without requiring access to source data. Although state-of-the-art methods leveraging local neighborhood structures show promise for SFDA, they tend to over-rely on prediction similarity among neighbors. This over-reliance accelerates the forgetting of source knowledge and increases susceptibility to local noise overfitting. To address these issues, we introduce ProCal, a probability calibration method that dynamically calibrates neighborhood-based predictions through a dual-model collaborative prediction mechanism. ProCal integrates the source model’s initial predictions with the current model’s online outputs to effectively calibrate neighbor probabilities. This strategy not only mitigates the interference of local noise but also preserves the discriminative information from the source model, thereby achieving a balance between knowledge retention and domain adaptation. Furthermore, we design a joint optimization objective that combines a soft supervision loss with a diversity loss to guide the target model. Our theoretical analysis shows that ProCal converges to an equilibrium where source knowledge and target information are effectively fused, reducing both knowledge forgetting and overfitting. We validate the effectiveness of our approach through extensive experiments on 31 cross-domain tasks across four public datasets. Our code is available at: \url{https://github.com/zhengyinghit/ProCal}.
\end{abstract}

\begin{IEEEkeywords}
Source-free domain adaptation, neighborhood, probability calibration, soft supervision loss, diversity loss.
\end{IEEEkeywords}

\section{Introduction}
\label{sec:introduction}
\IEEEPARstart{U}{nsupervised} Domain Adaptation (UDA) improves model reliability in dynamic open-world scenarios such as autonomous driving \cite{you2022unsupervised,sun2022shift,chen2023revisiting,wang2025unsupervised} and embodied AI \cite{lee2022moda,majumdar2023we,he2023learning,zheng2024survey} by mitigating distribution discrepancies between source and target domains. However, traditional UDA methods require access to source data during the adaptation process, which poses challenges in privacy-sensitive applications like medical diagnostics \cite{yang2022source,li2023enhancing,stan2024unsupervised,zhang2025mejo}. To overcome this limitation, Source-Free Domain Adaptation (SFDA) has emerged as an approach that leverages only pre-trained source models along with unlabeled target domain data \cite{li2024comprehensive,fang2024source}, thereby avoiding the risk of privacy data leakage. Nonetheless, SFDA encounters a critical challenge: the lack of source data restricts effective calibration of target domain representations, leading to significant performance degradation when facing substantial domain shifts.

\begin{figure}[t]
    \centering
    {\includegraphics[width=1.0\linewidth]{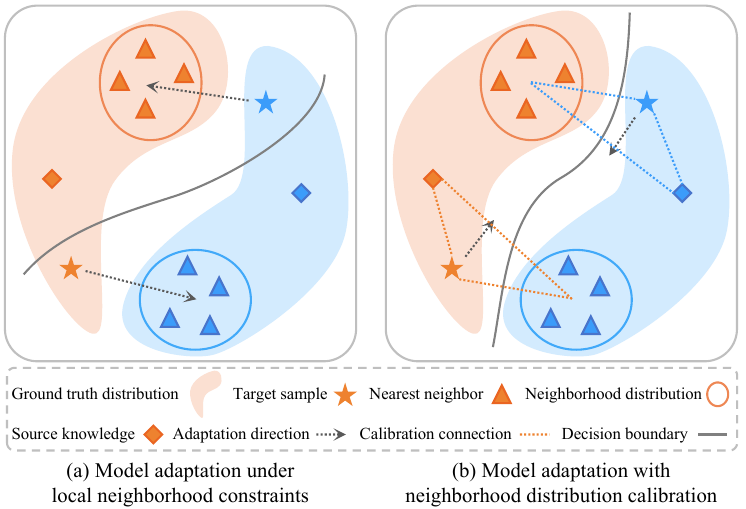}}
    \caption{Illustration of the effect of neighborhood distribution calibration. (a) Direct optimization with neighborhood constraints may lead to source knowledge forgetting and overfitting to noisy local target structures. (b) Neighborhood distribution calibration mitigates these issues and yields better generalization.}
    \label{fig:illustration}
\end{figure}

Existing SFDA methods can be broadly categorized into two paradigms. The first paradigm, exemplified by works such as SHOT \cite{liang2020we} and PS \cite{du2024generation}, focuses on global adaptation through pseudo-label self-training and information maximization \cite{hu2017learning}. However, these approaches are prone to error accumulation from noisy pseudo-labels and tend to overlook local sample relationships due to their global optimization bias. The second paradigm, represented by methods like NRC \cite{yang2021exploiting} and AaD \cite{yang2022attracting}, enforces local smoothness via neighborhood consistency constraints. These methods, however, heavily depend on target-domain prediction similarity for neighborhood construction while neglecting cross-domain discriminative priors. As a result, they face a dual dilemma: 1) an optimization process driven solely by target-domain similarities accelerates the forgetting of source model knowledge and undermines cross-domain invariance; 2) the closed-loop interaction between neighborhood construction and prediction refinement amplifies noise propagation during local aggregation, particularly affecting boundary samples. Fig.~\ref{fig:illustration} illustrates these phenomena. These issues ultimately arise from insufficient explicit constraints on the neighborhood distributions in current methodologies.

\begin{table}[t]
\centering
\setlength{\tabcolsep}{1mm}
\caption{Feature-accuracy (\%) comparison of representative SFDA methods. Sup., Nei., Cal., and Off. denote Supervision, Neighborhood, Calibration, and Office, respectively. Leveraging the proposed probability calibration, we achieve consistent improvements across four public datasets.}
\resizebox{1.0\linewidth}{!}{
\begin{tabular}{lccccccc}
\toprule
Method & Sup. & Nei. & Cal. & Off.-31 & Off.-Home & VisDA & DomainNet \\
\midrule
SHOT \cite{liang2020we} & hard & \ding{55} & \ding{55} & 88.6 & 71.8 & 82.9 & 68.9 \\
AaD \cite{yang2022attracting} & soft & \ding{51} & \ding{55} & 89.9 & 72.7 & 88.0 & 70.5 \\
\rowcolor{LightBlue}
\textbf{ProCal (ours)} & soft & \ding{51} & \ding{51} & \textcolor{red}{\textbf{90.7}} & \textcolor{red}{\textbf{73.8}} & \textcolor{red}{\textbf{88.6}} & \textcolor{red}{\textbf{72.8}} \\
\bottomrule
\end{tabular}
}
\label{tab:comparison}
\end{table}

\IEEEpubidadjcol

To tackle these challenges, we introduce ProCal, which dynamically calibrates neighborhood-based predictions by integrating two key components: 1) prior probabilities from the source model and 2) online predictions from the current target model. By fusing these elements in a probabilistic framework, ProCal effectively suppresses local target-domain noise while preserving critical discriminative information from the source model, thereby striking a balance between knowledge retention and domain adaptation. To further utilize these calibrated probabilities, we design a joint optimization objective that combines a soft supervision loss with a diversity loss, ensuring robust learning even amid significant distribution shifts. Our theoretical analysis, utilizing gradient and fixed-point techniques, demonstrates that ProCal leverages both external guidance and internal feedback to stabilize updates, thereby driving the target model toward a fixed point that harmonizes source knowledge with local target structure. 

Extensive experimental results on 31 subtasks across four public datasets demonstrate the validity of our method. Table \ref{tab:comparison} summarizes the characteristics and performance comparisons of representative SFDA methods, highlighting that ProCal distinguishes itself from existing work through its innovative calibration mechanism, which addresses the limitations of relying on uncalibrated probability distributions in local neighborhood-based soft supervision. Consequently, ProCal achieves superior performance across multiple benchmarks, demonstrating its effectiveness in handling domain shifts and task heterogeneity.

The main \textbf{contributions} of this work are summarized as follows: 1) We analyze the limitations of current SFDA approaches and identify two critical challenges: source knowledge forgetting and local noise overfitting that hinder performance. This analysis offers new insights into the bottlenecks of existing methods. 2) We introduce a novel ProCal method that employs a dual-objective optimization framework. By integrating soft supervision with diversified regularization constraints, ProCal effectively balances knowledge transfer with noise suppression. Our theoretical analysis further substantiates ProCal's effectiveness. 3) We conduct extensive experiments on datasets of varying scales and domain shifts, demonstrating ProCal's superiority over state-of-the-art methods.

\section{Related Work}
\textbf{Pseudo-labelling based SFDA}. The core challenge of SFDA lies in the absence of labeled data in the target domain. Many researchers tackle this challenge by leveraging pseudo-labels generated from a pre-trained source model to adapt the target model iteratively \cite{liang2020we,tian2021vdm,cui2023adversarial}. For example, SHOT \cite{liang2020we} performs self-supervised pseudo-labeling in combination with cross-entropy loss and information maximization to enhance feature discriminability. Although SHOT and its successors, such as SHOT++ \cite{liang2021source}, DaC \cite{zhang2022divide}, and SFDA-DE \cite{ding2022source}, have demonstrated promising results, domain shifts inevitably induce confidence miscalibration. This miscalibration causes the model to assign incorrect pseudo-labels to ambiguous samples near decision boundaries \cite{chen2019transferability}. To mitigate these inaccuracies, current approaches are typically categorized into two paradigms. One paradigm implements pseudo-label filtering by retaining high-confidence predictions through confidence-based techniques \cite{kim2021domain,tian2023dcl} or curriculum learning procedures as used in C-SFDA \cite{karim2023c}. The other paradigm treats SFDA as a noisy-label problem, using methods such as Early Learning Regularization (ELR) \cite{yi2023source} or Fuzzy-Aware Learning (FAL) \cite{zheng2025fuzzy} to suppress the memorization of incorrect labels. However, these frameworks are limited by over-conservative filtering or reliance on noise distribution assumptions that may not hold as noise patterns gradually evolve during adaptation.

\begin{figure*}[t]
    \centering
    \includegraphics[width=\textwidth]{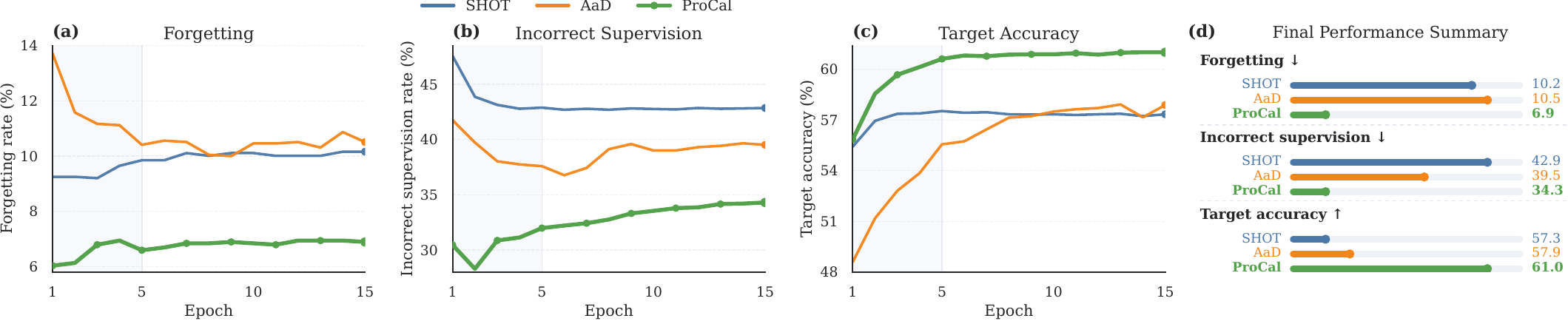}
    \caption{Training dynamics on the Ar$\rightarrow$Cl task of Office-Home. Compared with SHOT and AaD, ProCal consistently exhibits a lower source forgetting rate and a lower incorrect supervision rate throughout training, while achieving the highest target accuracy. This suggests that ProCal better preserves transferable source knowledge and reduces harmful target supervision during adaptation.}
    \label{fig:analysis}
\end{figure*}

\textbf{Neighborhood-based SFDA}. A complementary set of works \cite{yang2021exploiting,yang2022attracting,yang2021generalized,qiu2021source,litrico2023guiding,tejero2024robust,zhang2024neighborhood} leverages neighborhood-based approaches to enforce prediction consistency using local relationships among target samples. For example, NRC \cite{yang2021exploiting} expands sample neighborhoods through bidirectional nearest neighbor retrieval with dynamic affinity weighting. AaD \cite{yang2022attracting} employs an attracting and dispersing mechanism to balance agreement among neighbors and divergence for non-neighbors. PLUE \cite{litrico2023guiding} aggregates neighborhood predictions with uncertainty-aware weighting. However, these methods risk over-relying on local neighbors, which can lead to the gradual forgetting of valuable source model knowledge and a tendency to overfit local noise. Moreover, the vulnerability of neighborhood confidence estimates to model miscalibration further exacerbates error propagation in local regions. This motivates us to propose a dynamic calibration framework that bridges source model knowledge with neighborhood consensus to achieve more robust adaptation.

Existing methods, such as ELR \cite{yi2023source} and ProDe \cite{tang2025proxy}, also explore the utilization of early-stage historical information, but their solutions differ fundamentally from ours. ELR employs the early learning regularization \cite{liu2020early} to encourage model outputs to remain close to their initial predictions, while ProDe mitigates source model errors within the logit space of vision-language models. In contrast, our work proposes integrating source model priors with the current model's online feedback in the probability space, thereby generating more reliable weak supervision for target domain adaptation through calibrated neighborhood probability distributions. The key novelty of our method lies in effectively addressing two critical limitations in existing neighborhood-based methods: source knowledge forgetting and local noise overfitting, which have not been adequately resolved in prior studies.

\textbf{Recent progress in SFDA} attempts to broaden the scope of adaptation by leveraging generalized knowledge transfer through large pre-trained models \cite{tang2024sourcefree,xu2025batstyler,tang2025proxy} and by enhancing supervision with active annotation strategies \cite{wang2023mhpl,lyu2024learn}. In contrast, our work adheres to the standard SFDA protocol \cite{liang2020we} by eliminating dependencies on external knowledge sources and manual annotations. While this setting substantially increases technical challenges, it better aligns with practical scenarios that demand autonomous model adaptation without auxiliary resources.

\section{Method}
\subsection{Problem Definition}
In source-free domain adaptation, the goal is to adapt a source model $\theta_s$, pretrained on a labeled source domain, to an unlabeled target domain $\mathcal{D}_t=\{\bm{x}_i\}_{i=1}^{N}$, where $\bm{x}_i$ denotes the $i$-th target sample and $N$ is the number of target samples. We consider the closed-set setting, where the source and target domains share the same label space with $C$ classes.

The model $\theta_s$ consists of a feature extractor $f$ and a classifier $g$. For each target sample $\bm{x}_i$, the extracted feature is given by:
\begin{equation}
\bm{z}_i=f(\bm{x}_i), \qquad \bm{z}_i \in \mathbb{R}^{h},
\end{equation}
where $h$ is the feature dimension. The classifier then maps $\bm{z}_i$ to a class-probability vector:
\begin{equation}
\bm{p}_i=\mathrm{softmax}(g(\bm{z}_i)), \qquad \bm{p}_i \in \mathbb{R}^{C},
\end{equation}
where $\bm{p}_i=[p_{i,1},\dots,p_{i,C}]^{\top}$, and $p_{i,k}$ denotes the predicted probability that $\bm{x}_i$ belongs to class $k$.

\subsection{Critical Analysis of Existing Methods}
A critical examination of limitations in existing SFDA methods is essential to understand the motivation of our work. We investigate two widely adopted techniques, i.e., SHOT \cite{liang2020we} and AaD \cite{yang2022attracting}, by analyzing their underlying mechanisms and identifying inherent shortcomings. To quantitatively assess their limitations, we introduce three metrics, as illustrated in Fig.~\ref{fig:analysis}: 1) \textit{Source knowledge forgetting rate}, which is defined as the target model’s error rate on samples that were correctly classified by the source model and serves as a measure of catastrophic forgetting during domain adaptation; 2) \textit{Completely incorrect supervision rate}, representing the proportion of training samples with entirely erroneous pseudo-labels or neighbor predictions, thereby quantifying the risk of noise propagation; 3) \textit{Target domain accuracy}, which evaluates the overall classification performance on the target test set and reflects the final adaptation efficacy.

\underline{SHOT} \cite{liang2020we} employs DeepCluster-style self-supervision \cite{caron2018deep} to generate a pseudo-label $\hat{y}_i$ for each target sample $\bm{x}_i$ in $\mathcal{D}_t$. The target model is then optimized by using cross-entropy (CE) loss and the information maximization (IM) objective:
\begin{equation}
\mathcal{L}_{\mathrm{IM}}
=
-\mathbb{E}_{\bm{x}_i \in \mathcal{D}_t}
\sum_{k=1}^{C}
p_{i,k}\log p_{i,k}
+
\sum_{k=1}^{C}
\hat{p}_k \log \hat{p}_k,
\label{eq:IM}
\end{equation}
where $\hat{p}_k=\frac{1}{n} \sum_{i=1}^n p_{i,k}$ denotes the average predicted probability for class $k$ across $n$ target samples. The first term in $\mathcal{L}_{\mathrm{IM}}$ minimizes the prediction entropy for individual target samples, and the second term promotes diversity over classes at the dataset level. Fig.~\ref{fig:analysis}(a) demonstrates that SHOT exhibits an initially low source knowledge forgetting rate by relying on pseudo-labels anchored to the source model. However, this rate eventually stabilizes around 10\%, reflecting a failure to maintain source information during extended adaptation. Moreover, as shown in Fig.~\ref{fig:analysis}(b), the pseudo-label error rate remains above 42\% throughout training. This high error rate is a consequence of the hard supervision mechanism in the CE loss, in which domain shift introduces significant pseudo-label noise that leads to error propagation and ultimately compromises the effectiveness of adapted model.

\underline{AaD} \cite{yang2022attracting} constructs two sets for each target sample $\bm{x}_i$: a neighborhood set $C_i$ and a background set $B_i$. Its objective function is formulated to increase the similarity between the predicted probability distribution of each target sample and those of its neighboring samples in the set $C_i$, while decreasing the similarity with samples in the set $B_i$. The overall objective is defined as:
\begin{equation}
\mathcal{L}_{\mathrm{AaD}}
=
-\mathbb{E}_{\bm{x}_i \in \mathcal{D}_t}
\sum_{j \in C_i}
\bm{p}_i \cdot \bm{p}_j
+
\lambda_2
\mathbb{E}_{\bm{x}_i \in \mathcal{D}_t}
\sum_{m \in B_i}
\bm{p}_i \cdot \bm{p}_m,
\label{eq:AaD}
\end{equation}
where $\lambda_2$ is a trade-off parameter that balances the attracting and dispersing effects. Fig.~\ref{fig:analysis}(b) reveals that AaD reduces the completely incorrect supervision rate by approximately 3\% compared to SHOT, owing to its use of soft supervision based on local neighborhoods. However, the error remains high (exceeding 39\%), an outcome attributed to overfitting to local noise. Domain shifts can violate local manifold assumptions \cite{zhou2003learning,chapelle2009semi}, causing the model to adapt to inaccurate local topologies when enforcing neighborhood consistency. The attracting mechanism also results in pronounced short-term source knowledge forgetting, with peak rates of 13.7\% occurring in early training (Fig.~\ref{fig:analysis}(a)) and remaining around 10.5\% during later stages, which is higher than the forgetting rate observed in SHOT. Although AaD’s local constraints delay convergence (Fig.~\ref{fig:analysis}(c)), they eventually yield a slightly improved target domain accuracy (57.9\% versus 57.3\%). This finding supports the long-term benefit of local consistency yet exposes a central limitation: an excessive dependence on local priors that disrupts the balance between preserving source knowledge and suppressing noise. Such reliance ultimately undermines cross-domain generalization.

While neighborhood-based soft supervision has been confirmed beneficial for SFDA, the critical question remains: \textit{How can we overcome the dual challenges of source knowledge forgetting and local noise overfitting?}

\subsection{The ProCal Mechanism}
Regarding the above question, the core challenge arises from the probability distribution shift caused by unconstrained neighborhood-based estimation mechanisms, which significantly degrades the reliability of supervisory signals. To overcome this limitation, we propose a novel dynamic collaborative calibration method, denoted as ProCal, for optimizing neighborhood probability distributions. Our approach is built upon a dual-constraint mechanism: 1) we leverage the \textit{source model's initial predictions} as prior knowledge anchors to preserve its source-domain representation capability and mitigate knowledge forgetting, 2) and incorporate the \textit{current model's dynamic outputs} as real-time feedback to establish an adaptive probability correction mechanism that alleviates overfitting to local neighbors. These two complementary components collaboratively impose bidirectional constraints on neighborhood probability distributions, thereby enhancing the quality of supervisory signals.

\textbf{Neighborhood probability calculation}. For each target sample $\bm{x}_i$, we construct a neighborhood set $\mathcal{N}(\bm{x}_i)$ by measuring feature-space similarity. The procedure consists of two sequential operations: first applying L2-normalization to all features, then computing cosine similarities between normalized features to select the top-$k$ nearest neighbors. Afterwards, we aggregate the probabilities of the selected neighbors to obtain an uncalibrated distribution: $\bm{p}^{\mathcal{N}}_i = \sum_{j \in \mathcal{N}(\bm{x}_i)} \bm{p}_j$. This design preserves the original probability magnitudes and avoids signal attenuation that may occur with normalization. Furthermore, by retaining gradient sensitivity to variations in candidate class probabilities, the magnitude-preserving aggregation reinforces discriminative feature gradients and accelerates convergence.

\textbf{Neighborhood probability calibration}. Before initiating target domain training, we obtain initial prediction probabilities $\bm{p}_i^s$ for each target sample using the pre-trained source model $\theta_s$. During the adaptation process at training iteration $t$, the current target model $\theta_t$ outputs predictions $\bm{p}_i^t$. The ProCal mechanism fuses the neighbor aggregation result and the two sets of predictions as follows:
\begin{equation}
    \bm{p}_i^\text{cal} = \underbrace{\sum\nolimits_{j \in \mathcal{N}(\bm{x}_i)} \bm{p}_j}_{\text{neighbor aggregation}} ~~ + \underbrace{\gamma (\bm{p}_i^t + \bm{p}_i^s)_{{{}}_{\vphantom{\sum_\frac{}{}}}}}_{\text{probability calibration}},
\label{eq:calibration}
\end{equation}
where $\gamma$ is a modulation factor that controls the strength of the calibration process. The output $\bm{p}_i^\text{cal}$ serves as the calibrated neighborhood probability and will be utilized as a soft supervisory signal for the target model. As shown in Figs.~\ref{fig:analysis} (a) and (b), compared to existing methods, ProCal demonstrates enhanced suppression effectiveness on both source knowledge forgetting and erroneous supervision rates, which establishes a crucial foundation for improved performance in the target domain as evidenced in Fig.~\ref{fig:analysis} (c).

\begin{algorithm}[t]
\caption{The ProCal Algorithm} 
\label{alg:ProCal}
\textbf{Input}: Pretrained source model $\theta_s$, unlabeled target data $\mathcal{D}_t = \{\bm{x}_i\}_{i=1}^N$.
\begin{algorithmic}[1]
\STATE Initialize target model $\theta_t \gets \theta_s$ and store the initial predictions {$\bm{p}_i^s$}.
\FOR{each training interval}
    \STATE Compute features $\{\bm{z}_i\}$ and predictions $\{\bm{p}_i\}$ for all $\bm{x}_i \in \mathcal{D}_t$.
    \STATE Retrieve top-$k$ nearest neighbors for each $\bm{x}_i$.
    \STATE Calculate neighborhood probabilities $\bm{p}^{\mathcal{N}}_i$.
    \FOR{each mini-batch}
        \STATE Extract online predictions $\{\bm{p}_i^t\}$ via $\theta_t$.
        \STATE Calibrate $\{\bm{p}^{\mathcal{N}}_i\}$ via Eq. (\ref{eq:calibration}).
        \STATE Update $\theta_t$ by minimizing $\mathcal{L}_{\text{cal}}$ in Eq. (\ref{eq:ProCal}).
    \ENDFOR 
\ENDFOR
\RETURN Adapted target model $\theta_t$.
\end{algorithmic}
\end{algorithm}

\textbf{Objective function}. To exploit the calibrated neighborhood probability distribution, we design a dual-component objective function for training the target model. The objective is given by:
\begin{equation}
    \mathcal{L}_{\text{cal}} = \underbrace{- \mathbb{E}_{\bm{x}_i \in \mathcal{D}_t} (\bm{p}_i^\text{cal} \cdot \bm{p}_i)_{{{}}{\vphantom{\sum_\frac{}{}}}}}_{\mathcal{L}_{\text{soft}}} + \beta \underbrace{\sum\nolimits_{\bm{x}_i \in \mathcal{D}_t} \sum\nolimits_{k=1}^C p_{i,k} \hat{p}_k}_{\mathcal{L}_{\text{div}}},
\label{eq:ProCal}
\end{equation}
where $\beta$ is a hyperparameter that balances the contributions of the two loss terms. The dual design of $\mathcal{L}_{\text{cal}}$ simultaneously leverages calibrated local neighborhood structure ($\mathcal{L}_{\text{soft}}$) and global distributional constraints ($\mathcal{L}_{\text{div}}$) for effective source-free domain adaptation. 

The first term, denoted by $\mathcal{L}_{\text{soft}}$, employs a negative dot-product similarity to align the target sample's predicted distribution $\bm{p}_i$ with its calibrated neighborhood probability $\bm{p}_i^\text{cal}$ in probability space. This formulation not only preserves the source knowledge during parameter updates but also enforces consistency constraints that reduce the risk of overfitting to noisy neighborhood signals. The second term, denoted by $\mathcal{L}_{\text{div}}$, introduces a diversity constraint. This term minimizes the similarity between the target sample's predicted distribution and the global average probability distribution, thereby preventing the model from collapsing to a uniform distribution and enhancing the discriminative power of target domain features.

\textbf{Model training}. Algorithm~\ref{alg:ProCal} summarizes the key steps of the proposed ProCal method. During iterative optimization, the $k$-nearest neighbors in the feature space are periodically retrieved for each target sample, and their predicted probabilities are cached in memory. The memory update frequency is regulated by a hyperparameter $\tau$, which controls the frequency of neighbor retrieval and probability caching operations per training batch. The choice of $\tau$ depends on the scale and class distribution properties of the target domain data. A higher $\tau$ value facilitates frequent updates and enhances the temporal relevance of neighbor information, while a lower value reduces memory and computation costs. This setting ensures a dynamic balance between the timeliness of neighbor data and computational resource efficiency.

\subsection{Theoretical Analysis}
This section presents gradient and fixed-point analyses of $\mathcal{L}_{\text{soft}}$ to demonstrate that under the calibration mechanism defined in Eq.~(\ref{eq:calibration}), $\mathcal{L}_{\text{soft}}$ can lead the model to stably converge to an equilibrium that integrates both the source model knowledge and the target domain information, thereby effectively preventing source knowledge forgetting and target domain neighbor overfitting.

\textbf{Gradient analysis of $\mathcal{L}_{\text{soft}}$}.
For a given target sample $\bm{x}_i$, we first define $\bm{q}_i = \bm{p}^{\mathcal{N}}_i + \gamma \bm{p}^{s}_i$ and substitute $\bm{q}_i$ into Eq. (\ref{eq:calibration}), then we get $\bm{p}_i^\text{cal} = \bm{q}_i + \gamma \bm{p}_i^t$. Notably, $\bm{p}_i^t$ coincides with the model's current prediction $\bm{p}_i$, i.e., $\bm{p}_i^t = \bm{p}_i$. This allows us to reformulate the loss $\mathcal{L}_{\text{soft}}$ in Eq. (\ref{eq:ProCal}) as:
\begin{equation} 
\mathcal{L}_{\text{soft}} (\bm{x}_i) = - \Bigl(\bm{q}_i \cdot \bm{p}_i + \gamma \bm{p}_i \cdot \bm{p}_i \Bigr).
\label{eq:loss-expand}
\end{equation}
Since $\bm{p}^{\mathcal{N}}_i$ and $\bm{p}^{s}_i$ are fixed external signals without gradients (independent of the current model parameters), the gradient of $\mathcal{L}_{\text{soft}} (\bm{x}_i)$ with respect to $\bm{p}_i$ can be derived as:
\begin{equation} 
\nabla_{\bm{p}_i} \mathcal{L}_{\text{soft}} (\bm{x}_i) = - \Bigl(\bm{q}_i + 2 \gamma \bm{p}_i \Bigr).
\label{eq:grad}
\end{equation}
In this derivation, the probability constraints are temporarily omitted for analytical tractability and will be handled later via a Lagrange multiplier framework. Ultimately, under a gradient descent update with step size $\eta$, $\bm{p}_i$ is updated as:
\begin{equation} 
\bm{p}_i \leftarrow \bm{p}_i + \eta \Bigl(\bm{q}_i + 2 \gamma \bm{p}_i \Bigr).
\label{eq:update}
\end{equation}
This update shows that: 1) The term $\bm{q}_i = \bm{p}^{\mathcal{N}}_i + \gamma \bm{p}^{s}_i$ acts as an external guide, pushing $\bm{p}_i$ toward the useful information from the target neighborhood ($\bm{p}^{\mathcal{N}}_i$) and source model ($\bm{p}^{s}_i$). 2) The term $2 \gamma \bm{p}_i$ is a self-feedback term that regularizes the update, ensuring that $\bm{p}_i$ does not change too drastically. Thus, the loss $\mathcal{L}_{\text{soft}}$ encourages $\bm{p}_i$ to align with both the external information and its own current state, thereby stabilizing the training process.

\textbf{Fixed-point analysis}. In Algorithm \ref{alg:ProCal}, we periodically update the local neighborhood structure with a fixed training interval, ensuring that neighborhood relations remain stable within each interval. Based on this setting, we further analyze the fixed point $\bm{p}_i^*$ for each sample $\bm{x}_i$ under the probability simplex constraint: $\sum_{k=1}^C p_{i,k} = 1~\text{and}~p_{i,k} \ge 0$.
In particular, we consider the following constrained optimization problem:
$$
\min\; \mathcal{L}_{\text{soft}}(\bm{x}_i) \quad \text{s.t.} \quad \sum_{k=1}^C p_{i,k} = 1.
$$
Our main theoretical result is stated in the following theorem.
\begin{theorem}
\label{thm:fixedpoint}
Under the probability simplex constraint, the fixed-point solution for the sample $\bm{x}_i$ is given by $\bm{p}_i^* = \frac{1}{2\gamma}\left( \frac{2\gamma + \bm{1}^\top \bm{q}_i}{C}\,\bm{1} - \bm{q}_i \right)$, where $\bm{1}$ denotes the all-ones vector in $\mathbb{R}^C$.
\end{theorem}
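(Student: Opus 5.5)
We read ``fixed point'' as a stationary point of the projected update: a point at which the update in Eq.~(\ref{eq:update}), once it is restricted to the affine hyperplane $\{\bm{p}:\bm{1}^\top\bm{p}=1\}$, no longer moves $\bm{p}_i$. Equivalently, it is a stationary point of the Lagrangian of the constrained problem stated just before Theorem~\ref{thm:fixedpoint}. As in that problem, only the equality constraint $\bm{1}^\top\bm{p}_i=1$ is enforced; nonnegativity is left out, matching the relaxation announced after Eq.~(\ref{eq:grad}).

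First we form the Lagrangian
\begin{equation*}
\mathcal{J}(\bm{p}_i,\lambda) = -\bigl(\bm{q}_i\cdot\bm{p}_i+\gamma\,\bm{p}_i\cdot\bm{p}_i\bigr)+\lambda\bigl(\bm{1}^\top\bm{p}_i-1\bigr),
\end{equation*}
using the expanded form Eq.~(\ref{eq:loss-expand}). Here $\bm{q}_i$ is treated as constant within a training interval, as the fixed-point paragraph justifies. Setting $\nabla_{\bm{p}_i}\mathcal{J}=0$ and reusing the gradient computed in Eq.~(\ref{eq:grad}) gives $-\bm{q}_i-2\gamma\bm{p}_i+\lambda\bm{1}=\bm{0}$. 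Hence
\begin{equation*}
\bm{p}_i=\frac{1}{2\gamma}\bigl(\lambda\bm{1}-\bm{q}_i\bigr).
\end{equation*}

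Next we determine $\lambda$ from the constraint. Multiplying on the left by $\bm{1}^\top$ and using $\bm{1}^\top\bm{1}=C$ gives $(C\lambda-\bm{1}^\top\bm{q}_i)/(2\gamma)=1$, so $\lambda=(2\gamma+\bm{1}^\top\bm{q}_i)/C$. Substituting back yields exactly the claimed $\bm{p}_i^*$. As a check, $\bm{1}^\top\bm{p}_i^*=1$ holds.

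To tie this to the fixed-point language, we also note the projected form of the update. It is $\bm{p}_i\leftarrow\bm{p}_i+\eta\,\bm{P}(\bm{q}_i+2\gamma\bm{p}_i)$ with $\bm{P}=\bm{I}-\frac{1}{C}\bm{1}\bm{1}^\top$, which is the projection of Eq.~(\ref{eq:update}) onto the constraint hyperplane's tangent space. Its fixed points are exactly the points where $\bm{q}_i+2\gamma\bm{p}_i\in\mathrm{span}(\bm{1})$. This is the same condition as the Lagrange stationarity condition, so the two characterizations coincide. Since $\gamma>0$ and the system is linear, this fixed point is unique on the hyperplane.

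The main obstacle is interpretive rather than computational. The quadratic term $-\gamma\|\bm{p}_i\|^2$ makes $\mathcal{L}_{\text{soft}}$ concave. The stationary point is therefore not a minimizer of the stated constrained problem; on the hyperplane it is the critical point, in fact the maximizer. It is also not guaranteed to lie in the nonnegative orthant. I would therefore state carefully that the theorem characterizes the unique stationary or fixed point of the relaxed, equality-constrained dynamics, and not a global minimizer over the full simplex.
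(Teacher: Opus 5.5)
Your argument is correct and is essentially the paper's own proof: form the Lagrangian with only the equality constraint $\bm{1}^\top\bm{p}_i=1$, solve the linear stationarity condition to get $\bm{p}_i=(\lambda\bm{1}-\bm{q}_i)/(2\gamma)$, and fix $\lambda=(2\gamma+\bm{1}^\top\bm{q}_i)/C$ from the constraint. Your closing caveat is also correct and goes beyond the paper, which poses the problem as a minimization and says training drives $\bm{p}_i$ toward $\bm{p}_i^*$: since $\mathcal{L}_{\text{soft}}$ has Hessian $-2\gamma\bm{I}$ in $\bm{p}_i$, the point $\bm{p}_i^*$ is the maximizer of the loss on the hyperplane (under your projected update, deviations from it grow by a factor $1+2\gamma\eta$ per step), and it need not satisfy $p_{i,k}^*\ge 0$.
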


\textit{Proof}. We analyze the fixed point $\bm{p}_i^*$ for each sample $\bm{x}_i$ under the probability simplex constraint:
$$
\sum_{k=1}^C p_{i,k} = 1 \quad \text{and} \quad p_{i,k} \ge 0.
$$
In particular, we consider the following constrained optimization problem:
$$
\min\; \mathcal{L}_{\text{soft}}(\bm{x}_i) \quad \text{s.t.} \quad \sum_{k=1}^C p_{i,k} = 1.
$$
To handle the constraint, we introduce a Lagrangian multiplier $\lambda$ and formulate the Lagrangian function:
\begin{align}
\mathcal{L}_{\text{soft}} (\bm{x}_i, \lambda) &= - \Bigl(\bm{q}_i \cdot \bm{p}_i + \gamma \bm{p}_i \cdot \bm{p}_i \Bigr) + \lambda \Bigl(\sum_{k=1}^C p_{i,k} - 1 \Bigr) \nonumber \\
                                                                      &= - \Bigl(\sum_{k=1}^C q_{i,k}\, p_{i,k} + \gamma \sum_{k=1}^C p_{i,k}^2 \Bigr) + \lambda \Bigl(\sum_{k=1}^C p_{i,k} - 1 \Bigr).
\label{eq:lagrangian}
\end{align}
Taking the partial derivative with respect to $p_{i,k}$ and setting it equal to zero, we obtain:
\begin{equation}
\frac{\partial \mathcal{L}_{\text{soft}} (\bm{x}_i, \lambda)}{\partial p_{i,k}} = - \Bigl( q_{i,k} + 2 \gamma\, p_{i,k} \Bigr) + \lambda = 0.
\label{eq:partial}
\end{equation}
Solving for $p_{i,k}$ gives:
\begin{equation}
p_{i,k} = \frac{\lambda - q_{i,k}}{2 \gamma}.
\label{eq:pistar}
\end{equation}
Next, we enforce the probability simplex constraint:
\begin{align*}
\sum_{k=1}^C p_{i,k} = \sum_{k=1}^C \frac{\lambda - q_{i,k}}{2\gamma} = 1 \quad \Longrightarrow \quad \frac{C\lambda - \sum_{k=1}^C q_{i,k}}{2\gamma} = 1.
\end{align*}
Solving for the Lagrange multiplier $\lambda$, we have:
$$
C\lambda - \sum_{k=1}^C q_{i,k} = 2\gamma \quad \Longrightarrow \quad \lambda = \frac{2\gamma + \sum_{k=1}^C q_{i,k}}{C}.
$$
Substituting this expression for $\lambda$ back into Eq. \eqref{eq:pistar}, the fixed-point solution is given by:
\begin{equation}
p_{i,k}^* = \frac{1}{2\gamma} \left( \frac{2\gamma + \sum_{j=1}^C q_{i,j}}{C} - q_{i,k} \right).
\label{eq:fixedpoint}
\end{equation}
In vector form, defining $\bm{p}_i^*\in\mathbb{R}^C$ with components $p_{i,k}^*$ and letting $\bm{1}$ denote the all-ones vector in $\mathbb{R}^C$, we can express the fixed point as:
\begin{equation}
\bm{p}_i^* = \frac{1}{2\gamma}\left( \frac{2\gamma + \bm{1}^\top \bm{q}_i}{C}\,\bm{1} - \bm{q}_i \right).
\label{eq:fixedpoint_vector}
\end{equation}
It is straightforward to verify that the fixed point $\bm{p}_i^*$ satisfies the simplex constraint (i.e., $\bm{1}^\top \bm{p}_i^* = 1$), confirming the validity of the solution under the probability simplex constraint.

This completes the proof.

Since the fixed point $\bm{p}_i^*$ is determined solely by the aggregated external information $\bm{q}_i$, the online model converges within each training interval to a stable state that integrates both source knowledge and target-domain information. Specifically, as shown in Algorithm~\ref{alg:ProCal}, neighborhood relations are fixed within each interval (e.g., one epoch), so the optimization induced by $\mathcal{L}_{\text{soft}}$ drives the target model output $\bm{p}_i$ toward the fixed point $\bm{p}_i^*$, thereby balancing source knowledge with the local target-domain structure. Over the entire training process, which consists of a concatenated sequence of such intervals, dynamic neighborhood updates are naturally decomposed into piecewise static segments. Therefore, the above analysis holds within each interval and can be repeatedly applied after each neighborhood update, enabling the theoretical framework to capture the overall dynamic behavior in a piecewise manner.

\begin{table*}[t]
\centering
\fontsize{9pt}{10pt}\selectfont
\caption{Classification accuracies (\%) using ResNet-50 on the Office-Home dataset. The \textcolor{red}{\textbf{boldface}} denotes top performance, while \textcolor{blue}{\underline{underlined}} values indicate runner-up results.}
\resizebox{1.0\linewidth}{!}{
\begin{tabular}{lccccccccccccc}
\toprule
Method & Ar$\rightarrow$Cl & Ar$\rightarrow$Pr & Ar$\rightarrow$Rw & Cl$\rightarrow$Ar & Cl$\rightarrow$Pr & Cl$\rightarrow$Rw & Pr$\rightarrow$Ar & Pr$\rightarrow$Cl & Pr$\rightarrow$Rw & Rw$\rightarrow$Ar & Rw$\rightarrow$Cl & Rw$\rightarrow$Pr & Avg. \\
\midrule
SHOT \cite{liang2020we} & 57.1 & 78.1 & 81.5 & 68.0 & 78.2 & 78.1 & 67.4 & 54.9 & 82.2 & 73.3 & 58.8 & 84.3 & 71.8 \\
G-SFDA \cite{yang2021generalized} & 57.9 & 78.6 & 81.0 & 66.7 & 77.2 & 77.2 & 65.6 & 56.0 & 82.2 & 72.0 & 57.8 & 83.4 & 71.3 \\
NRC \cite{yang2021exploiting} & 57.7 & 80.3 & 82.0 & 68.1 & 79.8 & 78.6 & 65.3 & 56.4 & 83.0 & 74.0 & 58.6 & 85.6 & 72.2 \\
SHOT++ \cite{liang2021source} & 57.9 & 79.7 & 82.5 & 68.5 & 79.6 & 79.3 & 68.5 & 57.0 & 83.0 & 73.7 & 60.7 & 84.9 & 73.0\\
CoWA-JMDS \cite{lee2022confidence} & 56.9 & 78.4 & 81.0 & 69.1 & 80.0 & 79.9 & 67.7 & 57.2 & 82.4 & 72.8 & 60.5 & 84.5 & 72.5 \\
AaD \cite{yang2022attracting} & 59.3 & 79.3 & 82.1 & 68.9 & 79.8 & 79.5 & 67.2 & 57.4 & 83.1 & 72.1 & 58.5 & 85.4 & 72.7 \\
DaC \cite{zhang2022divide} & 59.1 & 79.5 & 81.2 & 69.3 & 78.9 & 79.2 & 67.4 & 56.4 & 82.4 & 74.0 & 61.4 & 84.4 & 72.8 \\
NRC+ELR \cite{yi2023source} & 58.4 & 78.7 & 81.5 & 69.2 & 79.5 & 79.3 & 66.3 & 58.0 & 82.6 & 73.4 & 59.8 & 85.1 & 72.6 \\
C-SFDA \cite{karim2023c} & 60.3 & 80.2 & \textcolor{blue}{\underline{82.9}} & 69.3 & 80.1 & 78.8 & 67.3 & 58.1 & 83.4 & 73.6 & 61.3 & \textcolor{blue}{\underline{86.3}} & 73.5 \\
TPDS \cite{tang2024source} & 59.3 & 80.3 & 82.1 & \textcolor{red}{\textbf{70.6}} & 79.4 & \textcolor{red}{\textbf{80.9}} & \textcolor{red}{\textbf{69.8}} & 56.8 & 82.1 & \textcolor{red}{\textbf{74.5}} & 61.2 & 85.3 & 73.5 \\
DPC \cite{xia2024discriminative} & 59.5 & 80.6 & \textcolor{blue}{\underline{82.9}} & 69.4 & 79.3 & 80.1 & 67.3 & 57.2 & \textcolor{blue}{\underline{83.7}} & 73.1 & 58.9 & 84.9 & 73.1\\
iSFDA \cite{mitsuzumi2024understanding} & 60.7 & 78.9 & 82.0 & 69.9 & 79.5 & 79.7 & 67.1 & \textcolor{blue}{\underline{58.8}} & 82.3 & \textcolor{blue}{\underline{74.2}} & 61.3 & \textcolor{red}{\textbf{86.4}} & 73.4\\
Co-learn \cite{zhang2025source} & 57.7 & 80.4 & \textcolor{red}{\textbf{83.3}} & \textcolor{blue}{\underline{70.1}} & 80.1 & \textcolor{blue}{\underline{80.6}} & 66.6 & 55.5 & \textcolor{red}{\textbf{84.1}} & 72.1 & 57.6 & 84.3 & 72.7 \\
PFC \cite{pan2025overcoming} & 60.0 & 80.9 & 82.7 & 68.8 & 80.0 & 79.5 & \textcolor{blue}{\underline{68.8}} & 58.5 & 83.0 & 72.9 & 60.9 & 86.1 & 73.5 \\
UCon-SFDA \cite{xu2025revisiting} & \textcolor{red}{\textbf{61.5}} & 80.5 & 82.1 & 69.3 & 80.8 & 78.7 & 67.0 & \textcolor{red}{\textbf{62.2}} & 82.0 & 72.2 & 61.9 & 85.5 & 73.6 \\
\midrule
Source & 44.6 & 67.3 & 74.8 & 52.7 & 62.7 & 64.8 & 53.0 & 40.6 & 73.2 & 65.3 & 45.4 & 78.0 & 60.2\\
\rowcolor{LightBlue}
\textbf{ProCal (ours)} & \textcolor{blue}{\underline{61.0}} & \textcolor{blue}{\underline{81.7}} & 82.5 & 69.3 & \textcolor{blue}{\underline{80.9}} & 79.9 & 68.6 & 57.5 & 82.2 & 73.7 & \textcolor{blue}{\underline{62.3}} & 86.1 & \textcolor{blue}{\underline{73.8}} \\
\rowcolor{LightBlue}
\textbf{ProCal++ (ours)} & \textcolor{blue}{\underline{61.0}} & \textcolor{red}{\textbf{82.1}} & 82.7 & 69.3 & \textcolor{red}{\textbf{81.3}} & 80.2 & 68.7 & 58.2 & 82.3 & 74.0 & \textcolor{red}{\textbf{62.7}} & 86.1 & \textcolor{red}{\textbf{74.1}} \\
\midrule
Supervised & 77.9 & 91.4 & 84.4 & 74.5 & 91.4 & 84.4 & 74.5 & 77.9 & 84.4 & 74.5 & 77.9 & 91.4 & 82.0\\
\bottomrule
\end{tabular}
}
\label{tab:officehome}
\end{table*}

\begin{table*}[t]
\centering
\caption{Classification accuracies (\%) using ResNet-50 on the DomainNet-126 dataset.}
\begin{tabular}{lccccccccccccc}
\toprule
Method & C$\rightarrow$P & C$\rightarrow$R & C$\rightarrow$S & P$\rightarrow$C & P$\rightarrow$R & P$\rightarrow$S & R$\rightarrow$C & R$\rightarrow$P & R$\rightarrow$S & S$\rightarrow$C & S$\rightarrow$P & S$\rightarrow$R & Avg. \\
\midrule
SHOT \cite{liang2020we} & 63.0 & 78.1 & 61.1 & 68.0 & 81.2 & 62.6 & 69.5 & 68.9 & 60.2 & 70.9 & 65.5 & 78.0 & 68.9 \\
G-SFDA \cite{yang2021generalized} & 64.7 & 77.6 & 62.8 & 70.9 & 81.5 & 67.2 & 71.0 & 70.6 & 62.9 & 73.9 & 68.1 & 78.1 & 70.8 \\
NRC \cite{yang2021exploiting} & 43.0 & 40.5 & 46.5 & 48.6 & 49.2 & 46.5 & 49.0 & 51.3 & 41.5 & 55.7 & 50.1 & 44.6 & 47.2 \\
DaC \cite{zhang2022divide} & 65.4 & 79.9 & 62.9 & 64.7 & 81.7 & 64.0 & 66.4 & 69.8 & 61.7 & 68.5 & 67.1 & 78.2 & 69.2 \\
CoWA-JMDS \cite{lee2022confidence} & \textcolor{red}{\textbf{66.8}} & \textcolor{red}{\textbf{81.6}} & 63.5 & 70.7 & 82.2 & 65.7 & 71.6 & 69.9 & 64.7 & 74.0 & 68.7 & \textcolor{red}{\textbf{81.3}} & 71.7 \\
AaD \cite{yang2022attracting} & 64.6 & 79.0 & 61.7 & 69.7 & 82.2 & 65.9 & 71.3 & 70.7 & 62.2 & 71.5 & 67.8 & 79.2 & 70.5 \\
C-SFDA \cite{karim2023c} & 55.2 & 65.3 & 62.1 & 68.5 & 80.4 & 59.2 & 70.8 & 71.1 & 62.7 & 64.4 & 67.4 & 72.6 & 66.6 \\
TPDS \cite{tang2024source} & 59.3 & 72.9 & 56.9 & 67.3& 78.9 & 62.0 & 69.7 & 68.1 & 57.3 & 68.4 & 64.4 & 75.5 & 66.7 \\
iSFDA \cite{mitsuzumi2024understanding} & 65.4 & 78.7 & 63.7 & 71.3 & 82.1 & 66.5 & 72.7 & 71.9 & 63.7 & 72.8 & 68.3 & 77.1  & 71.2 \\
Co-learn \cite{zhang2025source} & 58.7 & 75.7 & 51.9 & 54.9 & 76.6 & 46.1 & 61.4 & 63.7 & 49.1 & 65.0 & 62.7 & 76.7 & 61.9 \\
UCon-SFDA \cite{xu2025revisiting} & - & - & \textcolor{red}{\textbf{66.5}} & 69.3 & 81.0 & - & 75.2 & 71.1 & 64.3 & - & 68.1 & - & - \\
\midrule
Source & 46.5 & 60.7 & 49.4 & 54.3 & 75.5 & 48.0 & 57.0 & 63.1 & 48.3 & 56.9 & 51.8 & 60.7 & 56.0 \\
\rowcolor{LightBlue}
\textbf{ProCal (ours)} & \textcolor{blue}{\underline{66.5}} & 79.8 & 64.7 & \textcolor{blue}{\underline{71.6}} & \textcolor{blue}{\underline{84.0}} & \textcolor{blue}{\underline{68.9}} & \textcolor{blue}{\underline{75.4}} & \textcolor{blue}{\underline{72.8}} & \textcolor{blue}{\underline{65.5}} & \textcolor{blue}{\underline{74.9}} & \textcolor{blue}{\underline{70.0}} & 79.9 & \textcolor{blue}{\underline{72.8}} \\
\rowcolor{LightBlue}
\textbf{ProCal++ (ours)} & \textcolor{red}{\textbf{66.8}} & \textcolor{blue}{\underline{80.0}} & \textcolor{blue}{\underline{65.1}} & \textcolor{red}{\textbf{71.9}} & \textcolor{red}{\textbf{84.4}} & \textcolor{red}{\textbf{69.7}} & \textcolor{red}{\textbf{76.3}} & \textcolor{red}{\textbf{73.0}} & \textcolor{red}{\textbf{66.4}} & \textcolor{red}{\textbf{75.9}} & \textcolor{red}{\textbf{70.7}} & \textcolor{blue}{\underline{80.2}} & \textcolor{red}{\textbf{73.4}} \\
\midrule
Supervised & 80.9 & 89.9 & 78.4 & 83.6 & 89.9 & 78.4 & 83.6 & 80.9 & 78.4 & 83.6 & 80.9 & 89.9 & 83.2 \\
\bottomrule
\end{tabular}
\label{tab:domainnet126}
\end{table*}

\begin{table*}[t]
\centering
\caption{Classification accuracies (\%) using ResNet-101 on the VisDA-C dataset (Synthesis $\rightarrow$ Real).}
\begin{tabular}{lccccccccccccc}
\toprule
Method & plane & bcycl & bus & car & horse & knife & mcycl & person & plant & skbrd & train & truck & Per-class \\
\midrule
SHOT \cite{liang2020we} & 94.3 & 88.5 & 80.1 & 57.3 & 93.1 & 94.9 & 80.7 & 80.3 & 91.5 & 89.1 & 86.3 & 58.2 & 82.9 \\
G-SFDA \cite{yang2021generalized} & 96.1 & 83.3 & 85.5 & 74.1 & 97.1 & 95.4 & 89.5 & 79.4 & 95.4 & 92.9 & 89.1 & 42.6 & 85.4 \\
NRC \cite{yang2021exploiting} & 96.8 & 91.3 & 82.4 & 62.4 & 96.2 & 95.9 & 86.1 & 80.6 & 94.8 & 94.1 & 90.4 & 59.7 & 85.9 \\
SHOT++ \cite{liang2021source} & \textcolor{blue}{\underline{97.7}} & 88.4 & \textcolor{red}{\textbf{90.2}} & \textcolor{red}{\textbf{86.3}} & \textcolor{red}{\textbf{97.9}} & \textcolor{red}{\textbf{98.6}} & \textcolor{blue}{\underline{92.9}} & 84.1 & \textcolor{blue}{\underline{97.1}} & 92.2 & \textcolor{red}{\textbf{93.6}} & 28.8 & 87.3 \\
CoWA-JMDS \cite{lee2022confidence} & 96.2 & 89.7 & 83.9 & 73.8 & 96.4 & 97.4 & 89.3 & \textcolor{red}{\textbf{86.8}} & 94.6 & 92.1 & 88.7 & 53.8 & 86.9 \\
AaD \cite{yang2022attracting} & 97.4 & 90.5 & 80.8 & 76.2 & 97.3 & 96.1 & 89.8 & 82.9 & 95.5 & 93.0 & 92.0 & \textcolor{red}{\textbf{64.7}} & 88.0 \\
DaC \cite{zhang2022divide} & 96.6 & 86.8 & 86.4 & 78.4 & 96.4 & 96.2 & \textcolor{red}{\textbf{93.6}} & 83.8 & 96.8 & \textcolor{red}{\textbf{95.1}} & 89.6 & 50.0 & 87.3 \\
NRC+ELR \cite{yi2023source} & 97.1 & 89.7 & 82.7 & 62.0 & 96.2 & 97.0 & 87.6 & 81.2 & 93.7 & 94.1 & 90.2 & 58.6 & 85.8 \\
C-SFDA \cite{karim2023c} & 97.6 & 88.8 & 86.1 & 72.2 & 97.2 & 94.4 & 92.1 & \textcolor{blue}{\underline{84.7}} & 93.0 & 90.7 & \textcolor{blue}{\underline{93.1}} & \textcolor{blue}{\underline{63.5}} & 87.8 \\
TPDS \cite{tang2024source} & 97.6 & \textcolor{red}{\textbf{91.5}} & \textcolor{blue}{\underline{89.7}} & 83.4 & 97.5 & 96.3 & 92.2 & 82.4 & 96.0 & 94.1 & 90.9 & 40.4 & 87.6 \\
DPC \cite{xia2024discriminative} & 96.5 & 89.3 & 86.5 & 83.2 & 97.4 & 97.3 & 91.8 & 83.7 & 96.4 & \textcolor{blue}{\underline{94.8}} & 92.1 & 56.2 & \textcolor{blue}{\underline{88.8}} \\
iSFDA \cite{mitsuzumi2024understanding} & 97.5 & \textcolor{blue}{\underline{91.4}} & 87.9 & 79.4 & 97.2 & 97.2 & 92.2 & 83.0 & 96.4 & 94.2 & 91.1 & 53.0 & 88.4 \\
Co-learn \cite{zhang2025source} & \textcolor{blue}{\underline{97.7}} & 87.9 & 84.8 & 79.6 & \textcolor{blue}{\underline{97.6}} & 97.5 & 92.4 & 83.7 & 95.3 & 94.2 & 90.3 & 57.4 & 88.2 \\
\midrule
Source & 64.1 & 24.9 & 53.0 & 66.5 & 67.9 & 9.1 & 84.5 & 21.1 & 62.8 & 29.8 & 83.5 & 9.3 & 48.0 \\
\rowcolor{LightBlue}
\textbf{ProCal (ours)} & 97.6 & 90.3 & 88.9 & 81.3 & 97.4 & \textcolor{blue}{\underline{97.6}} & 91.6 & 84.4 & 95.9 & 93.8 & 91.2 & 52.9 & 88.6 \\
\rowcolor{LightBlue}
\textbf{ProCal++ (ours)} & \textcolor{red}{\textbf{98.0}} & 88.4 & 89.6 & \textcolor{blue}{\underline{85.4}} & \textcolor{blue}{\underline{97.6}} & 97.2 & \textcolor{red}{\textbf{93.6}} & \textcolor{red}{\textbf{84.7}} & \textcolor{red}{\textbf{97.3}} & 93.6 & 91.4 & 50.0 & \textcolor{red}{\textbf{88.9}} \\
\midrule
Supervised & 97.0 & 86.6 & 84.3 & 88.7 & 96.3 & 94.4 & 92.0 & 89.4 & 95.5 & 91.8 & 90.7 & 68.7 & 89.6 \\
\bottomrule
\end{tabular}
\label{tab:visdac}
\end{table*}

\section{Experiments}
We conduct experiments on four widely-adopted datasets: \textbf{Office-31} \cite{saenko2010adapting} (3 domains: Amazon (A), Webcam (W), Dslr (D); 4,652 images across 31 classes), \textbf{Office-Home} \cite{venkateswara2017deep} (4 domains: Artistic (Ar), Clip Art (Cl), Product (Pr), Real-world (Rw); 15,500 images from 65 classes with significant style divergence), \textbf{VisDA-C} \cite{peng2017visda} (synthetic-to-real adaptation with 152k synthesized images and 55k real images spanning 12 classes), and \textbf{DomainNet-126} \cite{saito2019semi} (4 domains: Clipart (C), Painting (P), Real (R), Sketch (S) from DomainNet \cite{peng2019moment}; 145k images across 126 classes). These datasets collectively encompass 31 cross-domain adaptation subtasks and cover camera/lighting variations (Office-31), artistic-reality shifts (Office-Home), simulation-to-reality gaps (VisDA-C), and large-scale multi-modal adaptation (DomainNet-126), providing rigorous testing scenarios from single-modality to cross-modal domain shifts.

\subsection{Implementation Details}
\label{app:implementation}
To ensure fair comparison with existing methods (SHOT \cite{liang2020we}, G-SFDA \cite{yang2021generalized}, NRC  \cite{yang2021exploiting}, AaD \cite{yang2022attracting}, C-SFDA \cite{karim2023c}), we adopt the same deep architectures consisting of a ResNet backbone \cite{he2016deep} for feature extraction followed by a classifier with sequential components: a fully connected layer, Batch Normalization, and a weight-normalized fully connected layer. Specifically, ResNet-50 is employed for Office-31, Office-Home, and DomainNet-126 datasets, while ResNet-101 is utilized for VisDA-C. For the network optimization, we employ SGD with a momentum of 0.9 and a batch size of 64 across all experiments. We implement differential learning rates where the base layers for Office-31/Office-Home/DomainNet-126 are set to 1e-3, with the final two FC layers using 1e-2. For VisDA-C, these learning rates are scaled down by $10\times$ (1e-4 base, 1e-3 FC). Following \cite{liang2020we} \cite{yang2022attracting}, the source model is first trained in a supervised manner on source dataset before adaptation. For the target domain adaptation, all target models are trained for 15 epochs across all datasets.

The proposed ProCal method involves four hyperparameters: $\gamma$ controlling the probability calibration intensity, $\beta$ scaling the divergence loss contribution, $k$ specifying the number of nearest neighbors, and $\tau$ determining the memory update frequency. The parameters $\gamma$ and $\beta$ adopt an iteration-adaptive decay scheme: $\gamma = (1-\frac{iter}{max\_iter})^{\gamma_1}$ and $\beta = (1-\frac{iter}{max\_iter})^{\beta_1}$, where $\gamma_1$ and $\beta_1$ govern exponential decay rates. Empirical observations indicate dataset-specific configurations: for Office-31, Office-Home, VisDA-C, and DomainNet-126, the parameters are set as $(\gamma_1, \beta_1, k, \tau) = (0, 1, 6, 2)$, $(0, 0, 6, 1)$, $(30, 30, 8, 10)$, and $(10, 5, 2, 2)$, respectively. All experiments are conducted in PyTorch on a single NVIDIA RTX 4090 GPU, with the number of workers set to 2. To ensure statistical reliability, each task is replicated three times, with average performance reported.

\begin{table}[!ht]
\centering
\setlength{\tabcolsep}{1mm}
\caption{Classification accuracies (\%) on Office-31.}
\resizebox{1.0\linewidth}{!}{
\begin{tabular}{lccccccc}
\toprule
Method & A$\rightarrow$D & A$\rightarrow$W & D$\rightarrow$A & D$\rightarrow$W & W$\rightarrow$A & W$\rightarrow$D & Avg. \\
\midrule
SHOT \cite{liang2020we} & 94.0 & 90.1 & 74.7 & 98.4 & 74.3 & \textcolor{blue}{\underline{99.9}} & 88.6 \\
NRC \cite{yang2021exploiting} & 96.0 & 90.8 & 75.3 & 99.0 & 75.0 & \textcolor{red}{\textbf{100.}} & 89.4 \\
SHOT++ \cite{liang2021source} & 94.3 & 90.4 & 76.2 & 98.7 & 75.8 & \textcolor{blue}{\underline{99.9}} & 89.2 \\
CoWA-JMDS \cite{lee2022confidence} & 94.4 & \textcolor{blue}{\underline{95.2}} & 76.2 & 98.5 & 77.6 & 99.8 & 90.3 \\
AaD \cite{yang2022attracting} & 96.4 & 92.1 & 75.0 & \textcolor{blue}{\underline{99.1}} & 76.5 & \textcolor{red}{\textbf{100.}} & 89.9 \\
NRC+ELR \cite{yi2023source} & 93.8 & 93.3 & 76.2 & 98.0 & 76.9 & \textcolor{red}{\textbf{100.}} & 89.6 \\
C-SFDA \cite{karim2023c} & 96.2 & 93.9 & \textcolor{red}{\textbf{77.3}} & 98.8 & 77.9 & 99.7 & 90.5 \\
TPDS \cite{tang2024source} & \textcolor{blue}{\underline{97.1}} & 94.5 & 75.7 & 98.7 & 75.5 & 99.8 & 90.2 \\
DPC \cite{xia2024discriminative} & 95.8 & 94.5 & 76.5 & 98.9 & 76.8 & \textcolor{red}{\textbf{100.}} & 90.5 \\
iSFDA \cite{mitsuzumi2024understanding} & 95.3 & 94.2 & 76.4 & 98.3 & 77.5 & \textcolor{blue}{\underline{99.9}} & 90.3 \\
Co-learn \cite{zhang2025source} & 96.6 & 92.5 & \textcolor{red}{\textbf{77.3}} & 98.9 & 76.6 & 99.8 & 90.3 \\
PFC \cite{pan2025overcoming} & \textcolor{red}{\textbf{97.3}} & 94.0 & 75.6 & \textcolor{red}{\textbf{99.2}} & 76.6 & \textcolor{red}{\textbf{100.}} & 90.5 \\
UCon-SFDA \cite{xu2025revisiting} & 94.8 & \textcolor{red}{\textbf{95.4}} & \textcolor{blue}{\underline{77.1}} & 98.9 & 77.1 & \textcolor{red}{\textbf{100.}} & 90.6 \\
\midrule
Source & 80.2 & 76.9 & 60.3 & 95.4 & 63.6 & 98.9 & 79.2 \\
\rowcolor{LightBlue}
\textbf{ProCal (ours)} & 96.2 & 93.7 & \textcolor{red}{\textbf{77.3}} & \textcolor{blue}{\underline{99.1}} & \textcolor{blue}{\underline{78.0}} & 99.8 & \textcolor{blue}{\underline{90.7}} \\
\rowcolor{LightBlue}
\textbf{ProCal++ (ours)} & 96.2 & 94.1 & \textcolor{red}{\textbf{77.3}} & \textcolor{blue}{\underline{99.1}} & \textcolor{red}{\textbf{78.1}} & \textcolor{red}{\textbf{100.}} & \textcolor{red}{\textbf{90.8}} \\
\midrule
Supervised & 98.0 & 98.7 & 86.0 & 98.7 & 86.0 & 98.0 & 94.3 \\
\bottomrule
\end{tabular}
}
\label{tab:office31}
\end{table}

\begin{table*}[htbp]
\centering
\caption{Classification accuracies (\%) on the Office-Home datasets under partial-set and open-set settings (ResNet-50 backbone).}
\resizebox{1.0\linewidth}{!}{
\begin{tabular}{@{}lccccccccccccc@{}}
\toprule
Method & Ar$\rightarrow$Cl & Ar$\rightarrow$Pr & Ar$\rightarrow$Rw & Cl$\rightarrow$Ar & Cl$\rightarrow$Pr & Cl$\rightarrow$Rw & Pr$\rightarrow$Ar & Pr$\rightarrow$Cl & Pr$\rightarrow$Rw & Rw$\rightarrow$Ar & Rw$\rightarrow$Cl & Rw$\rightarrow$Pr & Avg. \\
\midrule
\textit{Partial-set} \\
Source & 45.2 & 70.4 & 81.0 & 56.2 & 60.8 & 66.2 & 60.9 & 40.1 & 76.2 & 70.8 & 48.5 & 77.3 & 62.8 \\
SHOT \cite{liang2020we} & 64.8 & \textcolor{blue}{\underline{85.2}} & \textcolor{blue}{\underline{92.7}} & 76.3 & 77.6 & \textcolor{red}{\textbf{88.8}} & \textcolor{red}{\textbf{79.7}} & 64.3 & 89.5 & 80.6 & 66.4 & 85.8 & 79.3 \\
HCL \cite{huang2021model} & 65.6 & \textcolor{blue}{\underline{85.2}} & \textcolor{blue}{\underline{92.7}} & 77.3 & 76.2 & 87.2 & 78.2 & \textcolor{blue}{\underline{66.0}} & 89.1 & 81.5 & \textcolor{blue}{\underline{68.4}} & 87.3 & 79.6 \\
AaD \cite{yang2022attracting} & \textcolor{blue}{\underline{67.0}} & 83.5 & \textcolor{red}{\textbf{93.1}} & \textcolor{red}{\textbf{80.5}} & 76.0 & \textcolor{blue}{\underline{87.6}} & 78.1 & 65.6 & \textcolor{blue}{\underline{90.2}} & \textcolor{red}{\textbf{83.5}} & 64.3 & 87.3 & 79.7 \\
CRS \cite{zhang2023class} & \textcolor{red}{\textbf{68.6}} & 85.1 & 90.9 & \textcolor{blue}{\underline{80.1}} & \textcolor{blue}{\underline{79.4}} & 86.3 & 79.2 & \textcolor{red}{\textbf{66.1}} & \textcolor{red}{\textbf{90.5}} & 82.2 & \textcolor{red}{\textbf{69.5}} & \textcolor{red}{\textbf{89.3}} & \textcolor{blue}{\underline{80.6}} \\
TPDS \cite{tang2024source} & 61.9 & 81.0 & 85.0 & 71.4 & 77.5 & 81.0 & 74.9 & 60.5 & 83.8 & 79.2 & 61.4 & 84.7 & 75.2 \\
Co-learn \cite{zhang2025source} & 63.9 & 84.7 & 91.8 & 77.6 & 73.6 & 84.5 & 77.4 & 64.8 & \textcolor{red}{\textbf{90.5}} & 81.4 & 65.1 & 87.8 & 78.6 \\
UCon-SFDA \cite{xu2025revisiting} & 65.6 & 87.8 & 91.0 & 78.6 & 79.3 & 87.6 & 80.2 & 65.9 & 87.3 & 83.2 & 69.1 & 88.7 & 80.3 \\
\rowcolor{LightBlue}
\textbf{ProCal (ours)} & 66.8 &  \textcolor{red}{\textbf{88.9}} & 91.5 & 78.7 & \textcolor{red}{\textbf{84.7}} & 86.8 & \textcolor{blue}{\underline{79.6}} & 63.5 & 90.0 & \textcolor{blue}{\underline{82.4}} & 67.9 &  \textcolor{blue}{\underline{88.3}} & \textcolor{red}{\textbf{80.8}} \\
\midrule
\textit{Open-set} \\
Source & 36.3 & 54.8 & 69.1 & 33.8 & 44.4 & 49.2 & 36.8 & 29.2 & 56.8 & 51.4 & 35.1 & 62.3 & 46.6 \\
SHOT \cite{liang2020we} & 64.5 & \textcolor{red}{\textbf{80.4}} & \textcolor{blue}{\underline{84.7}} & 63.1 & 75.4 & \textcolor{blue}{\underline{81.2}} & 65.3 & 59.3 & \textcolor{blue}{\underline{83.3}} & 69.6 & 64.6 & \textcolor{red}{\textbf{82.3}} & 72.8 \\
HCL \cite{huang2021model} & 64.0 & 78.6 & 82.4 & 64.5 & 73.1 & 80.1 & 64.8 & 59.8 & 75.3 & \textcolor{red}{\textbf{78.1}} & \textcolor{red}{\textbf{69.3}} & 81.5 & 72.6 \\
AaD \cite{yang2022attracting} & 63.7 & 77.3 & 80.4 & 66.0 & 72.6 & 77.6 & 69.1 & \textcolor{blue}{\underline{62.5}} & 79.8 & 71.8 & 62.3 & 78.6 & 71.8 \\
CRS \cite{zhang2023class} & 65.2 & 76.6 & 80.2 & 66.2 & 75.3 & 77.8 & \textcolor{blue}{\underline{70.4}} & 61.8 & 79.3 & 71.1 & 61.1 & 78.3 & 73.2\\
TPDS \cite{tang2024source} & \textcolor{red}{\textbf{66.1}} & 76.4 &  81.2 & \textcolor{blue}{\underline{67.9}} & \textcolor{red}{\textbf{77.6}} & 79.6 & 68.8 & \textcolor{red}{\textbf{63.2}} & 82.6 &  \textcolor{blue}{\underline{72.4}} & \textcolor{blue}{\underline{65.4}} & 79.8 & \textcolor{blue}{\underline{73.4}} \\
Co-learn \cite{zhang2025source} & 55.0 & 76.1 & 78.2 & 55.0 & 73.7 & 73.7 & 53.2 & 47.6 & 77.7 & 61.8 & 57.5 & \textcolor{blue}{\underline{82.2}} & 66.0 \\
\rowcolor{LightBlue}
\textbf{ProCal (ours)} & \textcolor{blue}{\underline{65.4}} & \textcolor{blue}{\underline{78.7}} & \textcolor{red}{\textbf{85.9}} & \textcolor{red}{\textbf{68.1}} & \textcolor{blue}{\underline{76.9}} & \textcolor{red}{\textbf{82.2}} & \textcolor{red}{\textbf{71.8}} & 62.0 & \textcolor{red}{\textbf{84.5}} & 72.2 & 62.3 & \textcolor{red}{\textbf{82.3}} & \textcolor{red}{\textbf{74.4}} \\
\bottomrule
\end{tabular}
}
\label{tab:poda_results}
\end{table*}

\subsection{Comparative Results}
\textbf{Closed-set SFDA}. To align with the practice of utilizing auxiliary techniques for improved performance, we fuse ProCal with SHOT++ \cite{liang2021source}'s label transfer strategy, resulting in an enhanced version termed ProCal++. Tables \ref{tab:officehome}-\ref{tab:office31} present quantitative comparisons across four datasets. Compared to existing neighborhood-based methods (e.g., NRC \cite{yang2021exploiting}, AaD \cite{yang2022attracting}), ProCal demonstrates consistent accuracy improvements, empirically validating the robustness of probability calibration in neighborhood-guided domain adaptation. Notably, our method establishes new state-of-the-art performance on Office-31, Office-Home, DomainNet-126, and VisDA-C. Especially on the most challenging DomainNet-126 dataset, our method achieves the best performance in 9 out of 12 subtasks. These cross-dataset advancements highlight ProCal’s generalized capability to address domain shifts under varying data scales and adaptation challenges, solidifying its effectiveness for SFDA tasks.

\textbf{Partial-set SFDA (PDA) and Open-set SFDA (ODA)}. Following \cite{liang2020we}, we test the proposed method under the PDA and ODA settings for a broader evaluation. For PDA, we use all 65 source classes while restricting the target domain to the first 25 alphabetically ordered classes. Conversely, for ODA, we maintain the same 25 source classes but expand the target domain to include all 65 classes, where the additional 40 classes represent unknown categories that are semantically distinct from the source domain classes. As shown in Table \ref{tab:poda_results}, although not specifically designed for partial-set and open-set scenarios, our method achieves the best performance in both settings, further confirming its value for broader domain adaptation applications.

\subsection{Ablation and Backbone Analysis}
\textbf{Ablation studies}. Table \ref{tab:ablation} presents ablation studies validating the effectiveness of the proposed objective function and ProCal mechanism. Compared to the source model baseline (69.7\%, row 1), using only the neighborhood-based soft supervision $\mathcal{L}_{\text{soft}}$ achieves 76.2\% accuracy (row 2), confirming that local probability constraints effectively guide target domain adaptation. In contrast, the unsupervised divergence loss $\mathcal{L}_{\text{div}}$ alone (row 3) yields substantially lower performance (57.9\%), exposing the insufficiency of isolated divergence optimization. Jointly optimizing both losses (row 4) elevates accuracy to 82.3\%, demonstrating their complementary roles in balancing discriminative feature learning and domain alignment. Furthermore, disabling probability calibration in either the target model (-0.7\%, row 5) or source model (-0.5\%, row 6) degrades performance, while removing both components (row 7) compounds the decline to 1.0\%, proving that dual-model co-calibration in ProCal optimally suppresses neighborhood prediction bias. These results indicate that the synergy between the dual-loss objective and ProCal's probability space refinement contributes to the observed performance gains.

\begin{table}[htbp]
\centering
\caption{Component-wise ablation study (\%).}
\begin{tabular}{c|ccccc}
\toprule
\# & $\mathcal{L}_{\text{soft}}$ & $\mathcal{L}_{\text{div}}$ & Office-31 & Office-Home & Avg.\\
\midrule
1 & \multicolumn{2}{l}{Source model only} & 79.2 & 60.2 & 69.7 \\
\midrule
2 & \ding{51} & \ding{55} & 85.4 & 66.9 & 76.2 \\
3 & \ding{55} & \ding{51} & 76.0 & 39.8 & 57.9 \\
4 & \ding{51} & \ding{51} & \textbf{90.7} & \textbf{73.8} & \textbf{82.3} \\
\midrule
5 & \multicolumn{2}{l}{$\mathcal{L}_{\text{soft}}$ w/o target ($\bm{p}_i^t$)} & 90.2 & 72.9 & 81.6 \\
6 & \multicolumn{2}{l}{$\mathcal{L}_{\text{soft}}$ w/o source ($\bm{p}_i^s$)} & 90.3 & 73.3 & 81.8 \\
7 & \multicolumn{2}{l}{$\mathcal{L}_{\text{soft}}$ w/o both} & 90.0 & 72.5 & 81.3 \\
\bottomrule
\end{tabular}
\label{tab:ablation}
\end{table}

\textbf{Results with different backbones}. To examine the robustness of our method to backbone choice, we evaluate it with two representative architectures, ViT \cite{dosovitskiy2020image} and ConvNeXt \cite{liu2022convnet}. Table~\ref{tab:networks} reports the average accuracy on Office-Home. As expected, stronger backbones improve the performance of all methods. Nevertheless, our method consistently achieves the best results under all backbone settings. In particular, ProCal++ surpasses AaD by 1.4\%, 2.3\%, and 2.3\% with ResNet-50, ViT-B/16, and ConvNeXt-T, respectively, indicating that the proposed method remains effective across different backbone architectures.

\begin{table}[htbp]
\centering
\caption{Average accuracy (\%) of SFDA methods on Office-Home with different backbone architectures.}
\resizebox{1.0\linewidth}{!}{
\begin{tabular}{lccccc}
\toprule
Backbone & Source & SHOT \cite{liang2020we} & AaD \cite{yang2022attracting} & \textbf{ProCal} & \textbf{ProCal++}\\
\midrule
ResNet-50 \cite{he2016deep} & 60.2 & 71.8 & 72.7 & 73.8 & \textbf{74.1} \\
ViT-B/16 \cite{dosovitskiy2020image} & 68.9 & 75.5 & 76.4 & 77.4 & \textbf{78.7} \\
ConvNeXt-T \cite{liu2022convnet} & 69.6 & 77.9 & 78.5 & 79.0 & \textbf{80.8} \\
\bottomrule
\end{tabular}
}
\label{tab:networks}
\end{table}

\begin{table}[htbp]
\centering
\caption{Mean accuracy (\%) and standard deviation over repeated runs on four SFDA benchmarks. The consistently small variances indicate that the performance gains of ProCal and ProCal++ are stable across different runs.}
\label{tab:statistic}
\resizebox{1.0\linewidth}{!}{
\begin{tabular}{lcccc}
\toprule
Method & Office-31 & Office-Home & VisDA-C & DomainNet-126 \\
\midrule
ProCal   & 90.7$\pm$0.12 & 73.8$\pm$0.06 & 88.6$\pm$0.18 & 72.8$\pm$0.08 \\
ProCal++ & \textbf{90.8$\pm$0.07} & \textbf{74.1$\pm$0.05} & \textbf{88.9$\pm$0.05} & \textbf{73.4$\pm$0.11} \\
\bottomrule
\end{tabular}
}
\end{table}

\begin{figure*}[t]
    \centering
    \subfloat[Source model]{%
        \includegraphics[width=0.32\textwidth]{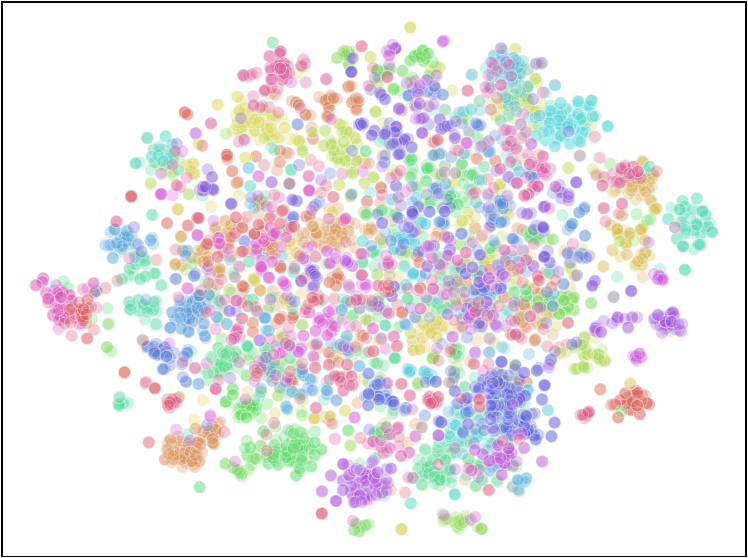}%
        \label{fig:source_a2c}
    }
    \hfill
    \subfloat[AaD]{%
        \includegraphics[width=0.32\textwidth]{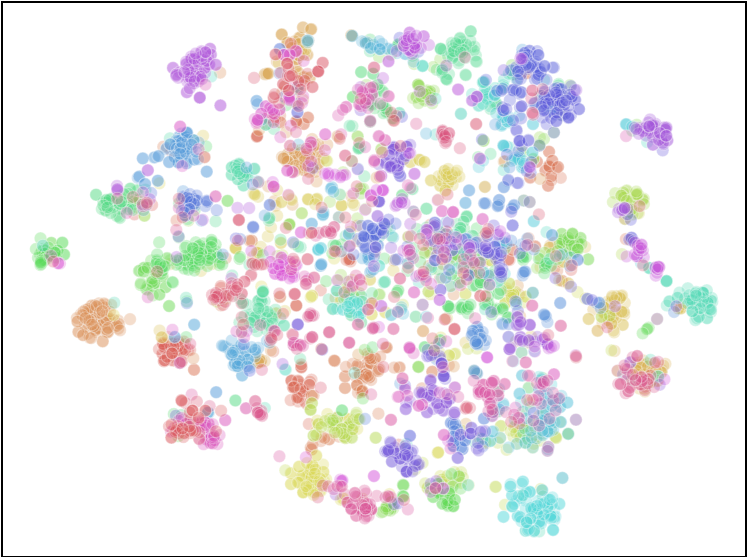}%
        \label{fig:AaD_a2c}
    }
    \hfill
    \subfloat[ProCal (ours)]{%
        \includegraphics[width=0.32\textwidth]{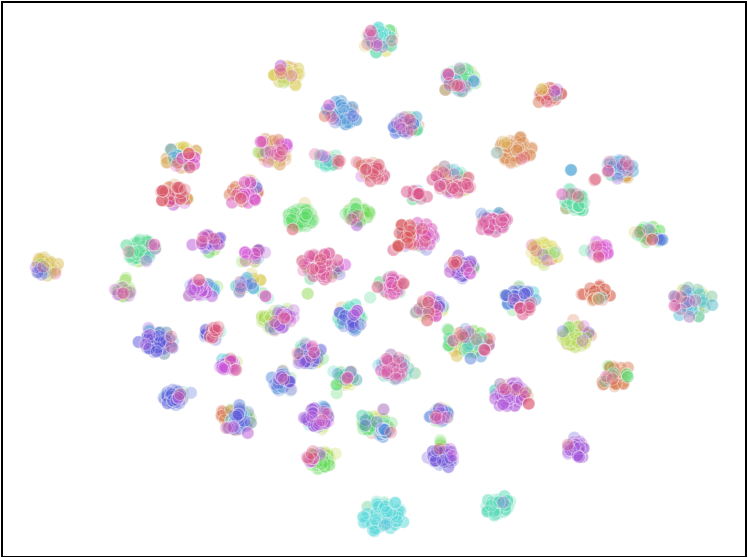}%
        \label{fig:ProCal_a2c}
    }
    \caption{Feature space visualizations of the representations learned by (a) the source model, (b) AaD \cite{yang2022attracting}, and (c) our ProCal on the Ar\(\rightarrow\)Cl task of Office-Home. ProCal produces more compact and better-separated class clusters.}
    \label{fig:tsne}
\end{figure*}

\subsection{Further Analysis and Discussion}
\textbf{Robustness of performance gains}. To assess whether the reported improvements are stable, we report the mean accuracy and standard deviation over repeated runs on four SFDA benchmarks in Table~\ref{tab:statistic}. The standard deviations are small for both ProCal and ProCal++ across all datasets, which indicates that the results are consistent over different runs. ProCal++ also achieves higher average accuracy than ProCal while keeping a similarly low variance. Furthermore, we analyze the 31 domain adaptation pairs based on the gap between the SOTA and the Oracle upper bound, as shown in Table~\ref{tab:gap_analysis}. On the 20 pairs with a gap larger than 5\%, our method ranks in the top two on 18 pairs. On the 11 pairs with a gap no greater than 5\%, it ranks in the top two on only 3 pairs. This shows that the gains of our method mainly come from the more challenging tasks, where there is still clear room for improvement, while the gains are naturally smaller on near-saturated tasks.

\begin{table}[htbp]
\centering
\caption{Task-level analysis of gain robustness across 31 domain pairs, grouped by the gap between previous SOTA and Oracle performance.}
\label{tab:gap_analysis}
\begin{tabular}{lcc}
\toprule
Task group & \#Pairs & \#Top-2 results of ProCal \\
\midrule
SOTA--Oracle gap $> 5\%$  & 20 & 18 \\
SOTA--Oracle gap $\leq 5\%$ & 11 & 3 \\
\bottomrule
\end{tabular}
\end{table}

\textbf{Robustness under inaccurate source priors}. Since ProCal uses source-model predictions as prior anchors during adaptation, a natural concern is whether its performance deteriorates when these priors become unreliable under large domain shifts. To investigate this issue, we conduct a controlled experiment on Office-Home by injecting different levels of noise into the source predictions. The results are reported in Table~\ref{tab:source_prior}. As the noise rate $\eta$ increases from 0.2 to 0.8, the accuracy of the source model drops sharply from 48.5\% to 12.7\%, indicating that the corrupted priors become highly unreliable. In contrast, ProCal remains remarkably stable, with accuracy only slightly decreasing from 73.4\% to 72.5\%. This observation shows that ProCal does not over-rely on the source prior. Instead, it can effectively leverage neighborhood-guided target structure to compensate for inaccurate prior information, demonstrating strong robustness under severely corrupted source predictions.

\begin{table}[htbp]
\centering
\caption{Robustness to inaccurate source priors on Office-Home under different noise rates $\eta$ injected into source-model predictions. Higher $\eta$ indicates more severely corrupted priors.}
\label{tab:source_prior}
\begin{tabular}{lcccc}
\toprule
Method & $\eta=0.2$ & $\eta=0.4$ & $\eta=0.6$ & $\eta=0.8$ \\
\midrule
Source model & 48.5 & 36.7 & 24.6 & 12.7 \\
\textbf{ProCal} & \textbf{73.4} & \textbf{73.2} & \textbf{72.8} & \textbf{72.5} \\
\bottomrule
\end{tabular}
\end{table}

\textbf{Feature distribution visualization}. We conduct a t-SNE-based \cite{van2008visualizing} visual analysis of feature distributions in the Ar$\rightarrow$Cl cross-domain task of Office-Home, projecting 256-dimensional features generated by the source model, AaD \cite{yang2022attracting}, and our ProCal method into a 2D embedding space. As shown in Fig.~\ref{fig:tsne}, features trained exclusively on the source domain exhibit significant entanglement, indicating severe domain shift challenges. While AaD demonstrates improved feature separability, most categories still show notable clustering overlaps. In contrast, our ProCal reveals marked optimization: enhanced intra-class feature compactness with clear inter-class separation. This visualization demonstrates that ProCal successfully constructs a highly discriminative feature space in the target domain, providing an intuitive explanation for the accuracy improvements achieved by ProCal.

\begin{figure*}[t]
    \centering
    \includegraphics[width=\textwidth]{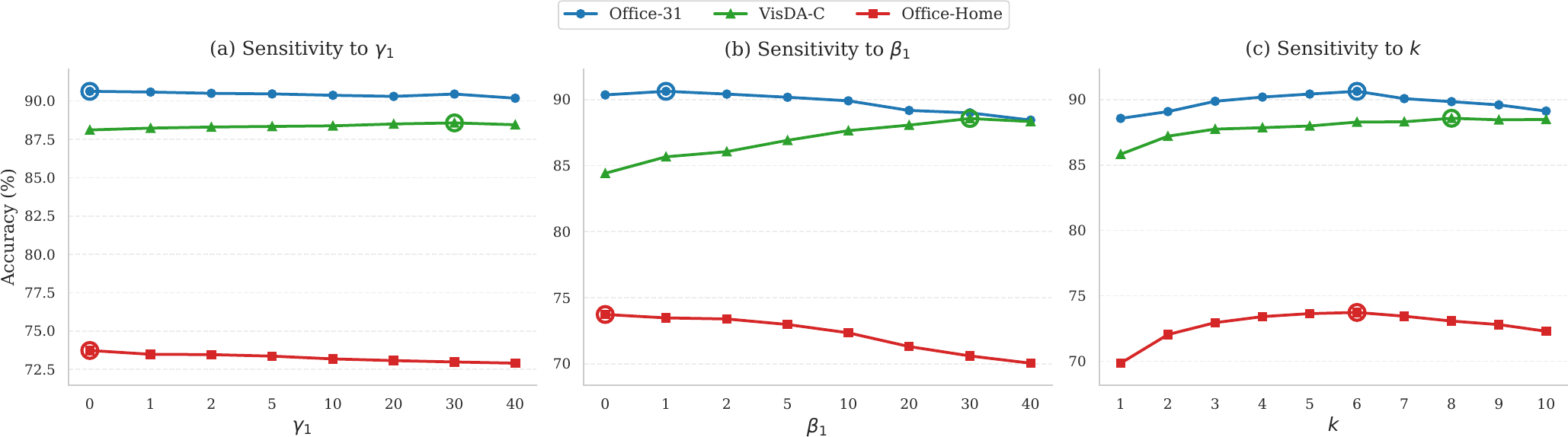}
    \caption{Sensitivity analysis of the hyperparameters $\gamma_1$, $\beta_1$, and $k$ on Office-31, VisDA-C, and Office-Home. Although the performance varies to some extent with different hyperparameter choices, it remains relatively stable in the vicinity of the selected values, suggesting that the proposed method is not overly sensitive to hyperparameter tuning.}
    \label{fig:hyperparameter}
\end{figure*}

\textbf{Hyperparameter Sensitivity}. Fig.~\ref{fig:hyperparameter} investigates the impact of $\gamma_1$ (calibration strength decay), $\beta_1$ (divergence loss decay), and $k$ (neighborhood count) on cross-dataset performance. Overall, the performance varies to some extent with different hyperparameter choices, but remains relatively stable in the vicinity of the adopted settings. Specifically, $\gamma_1$ is comparatively less sensitive, with only mild fluctuations across a wide range of values. In contrast, $\beta_1$ shows a more noticeable influence on performance. For $k$, the best results are generally obtained within a moderate range rather than at extreme values. Although the preferred values differ slightly across datasets, these results suggest that the adopted hyperparameter settings are reasonable and that the proposed method does not rely on overly delicate tuning.

\textbf{Impact of memory update frequency $\tau$}. Table~\ref{tab:interval} reports the effect of the memory update frequency $\tau$ on Office-31, Office-Home, and VisDA-C. The results show that the impact of $\tau$ varies across datasets. On Office-31 and Office-Home, the best performance is achieved at relatively small values of $\tau$, while increasing $\tau$ beyond this range does not bring further improvement and may even lead to slight degradation. In contrast, VisDA-C benefits from larger values of $\tau$, with the best result obtained at $\tau=10$. These observations suggest that the preferred update frequency should be selected according to the characteristics of the target benchmark.

\begin{table}[htbp]
\centering
\caption{Effect of the memory update frequency $\tau$ on the classification accuracy (\%) of ProCal on Office-31, Office-Home, and VisDA-C.}
\begin{tabular}{c|ccc}
\toprule
$\tau$ & Office-31 & Office-Home & VisDA-C \\
\midrule
1  & 90.3 & \textbf{73.7} & 76.2 \\
2  & \textbf{90.6} & 73.1 & 78.9 \\
4  & 90.5 & 72.8 & 79.7 \\
6  & 90.0 & 72.4 & 79.8 \\
8  & 89.8 & 72.0 & 88.1 \\
10 & N/A  & 71.7 & \textbf{88.6} \\
\bottomrule
\end{tabular}
\label{tab:interval}
\end{table}

\section{Conclusion}
Existing neighborhood-based methods for source-free domain adaptation suffer from two critical issues, namely source knowledge forgetting and local noise overfitting. To address these challenges, we propose ProCal, a probability calibration framework that constructs a dynamic probability fusion space by integrating the source model's prior knowledge with the online model's predictions to align probability distributions. To enhance robustness, we introduce a dual-objective optimization scheme comprising a soft supervision loss and a diversity loss. The former establishes supervision based on calibrated probabilities, whereas the latter mitigates model collapse through diversity constraints on prediction probabilities. Theoretical analysis further supports the role of ProCal in balancing source knowledge and target information during adaptation. Experimental results on four challenging datasets demonstrate the effectiveness of the proposed framework.

\bibliographystyle{IEEEtran}
\bibliography{main}

\end{document}